\let\Ginclude@graphics\@org@Ginclude@graphics 
\title[KMGM]{Kernelized multi-graph matching}
  \author{\Name{Fran{\c c}ois-Xavier Dup\'e} \Email{francois-xavier.dupe@univ-amu.fr}\\
  \addr Aix Marseille Univ, CNRS, LIS, Marseille, France
  \AND
  \Name{Rohit Yadav} \Email{rohit.yadav@univ-amu.fr}\\
  \addr Aix Marseille Univ, CNRS, INT, Inst Neurosci Timone, Marseille, France \\
  \addr Aix Marseille Univ, CNRS, LIS, Marseille, France\\
  \addr Aix Marseille Univ, Institut Marseille Imaging, Marseille, France
  \AND
  \Name{Guillaume Auzias} \Email{guillaume.auzias@univ-amu.fr} \\
  \Name{Sylvain Takerkart} \Email{sylvain.takerkart@univ-amu.fr}\\
  \addr Aix Marseille Univ, CNRS, INT, Inst Neurosci Timone, Marseille, France
 }
 \DeclareMathOperator*{\argmin}{argmin}
\begin{document}

\maketitle

\begin{abstract}
  Multigraph matching is a recent variant of the graph matching problem. In this
  framework, the optimization procedure considers several graphs and enforces the
  consistency of the matches along the graphs. This constraint can be formalized as a
  cycle consistency across the pairwise permutation matrices, which implies the definition
  of a universe of vertex~\citep{pachauri2013solving}. The label of each vertex is encoded by
  a sparse vector and the dimension of this space corresponds to the rank of the bulk
  permutation matrix, the matrix built from the aggregation of all the pairwise
  permutation matrices. The matching problem can then be formulated as a non-convex
  quadratic optimization problem (QAP) under constraints imposed on the rank and the
  permutations.  In this paper, we introduce a novel kernelized multigraph matching
  technique that handles vectors of attributes on both the vertices and edges of the graphs,
  while maintaining a low memory usage.  We solve the QAP problem using a projected power
  optimization approach and propose several projectors leading to improved stability of
  the results. We provide several experiments showing that our method is competitive
  against other unsupervised methods.
\end{abstract}
\begin{keywords}
graph matching, kernel, multi-graph matching, non-convex optimization
\end{keywords}

\section{Introduction \& Related work}

Graph matching is general problem with many applications such as e.g. object recognition
and registration, or shape matching to cite a few.
Let $G_1$ and $G_2$ be two graphs with respectively $n_1$ and $n_2$ vertices. The pairwise
matching problem is often cast as a quadratic assignment problem (QAP), using the Lawler's
formula~\citep{lawler1963quadratic},
\begin{equation}
  \label{eq:lawler}
  \max_{X\in\mathcal{P}_{n_1,n_2}} {\mathrm{vec}(X)}^{T} K^{e} \mathrm{vec}(X) + \mathrm{tr}(K^{v} X)~,
\end{equation}
with $\mathcal{P}_{n_1,n_2}$ the set of permutation matrix of size $n_1 \times n_2$,
$K^{v}\in\mathbb{R}^{n_1\times n_2}$ the vertex affinity matrix and
$K^{e} \in \mathbb{R}^{n_1 n_2 \times n_1 n_2}$ the edge affinity matrix. These affinity matrices can be built using kernels, then $K^{v}$ and $K^{e}$ are
both Gram matrix build using respectively a vertex kernel and an edge kernel~\citep{zhang2019kergm}.


As $K^{e}$ could be a very large matrix, usually the Koopmans-Beckmann's
QAP~\citep{koopmans1957assignment} is preferred,
\begin{equation}
  \label{eq:koopmans}
  \max_{X\in\mathcal{P}_{n_1,n_2}} \mathrm{tr}(XA_1XA_2) + \mathrm{tr}(K^{v}X)~,
\end{equation}
with $A_1$ and $A_2$ respectively the adjacency matrices of $G_1$ and $G_2$. 
\eqref{eq:koopmans} is a special case of Lawler's QAP with $K^{e} = A_1 \otimes A_2$ (the
Kronecker product between the two adjacency matrices).%

Recently,~\cite{zhang2019kergm} showed that by using specific kernels for computing the
edge affinity matrix,~\eqref{eq:lawler} has a more memory manageable writing. Furthermore,
their formulation with kernels allows to match a large diversity of graphs, as kernels can
easily manage labels or vectors of attributes. They also propose a regularization scheme combined with a
Frank-Wolfe optimization leading to very good performances compared to state of the art
methods and good robustness to noise both on attributes and structure.

Multigraph matching is an extension of the graph matching problem, where one aims at
matching several graphs at once while enforcing coherence between the vertex assignments. In
order to enforce the coherence, the idea is to search for a cycle
consistency~\citep{pachauri2013solving} between the permutation matrices, i.e. if $X_{i,j}$
is the permutation matrix between graphs $i$ and $j$, for a graph $k$ we have
$X_{i, j} \approx X_{i,k} X_{k,j}$. It is an approximation since the graphs may not have
the same number of vertices.

\cite{pachauri2013solving} showed that the cycle consistency property is equivalent to
projecting the vertices into a discrete space. A matching between two vertices means that they
have the same image in this space. The dimension of this space is directly linked to the
diversity of the vertices in the set of graphs. This diversity reflects the attributes and/or
labels that are defined on the vertices. Recent methods like MatchALS~\citep{zhou2015multi}
implicitly use it by minimizing the rank the bulk permutation matrix.  Some others such as
HiPPI~\citep{bernard2019hippi} and GA-MGM~\citep{wang2020graduated} directly run the
optimization in the universe of vertex.  On the other side CAO~\citep{yan2015multi} and
Floyd~\citep{jiang2020unifying} optimize both affinity and the cycle consistency using a
graduated scheme. MLRWM-multi~\citep{park2016multi}, a multigraph extension of
RRWM~\citep{cho2010reweighted} is based on a multi-layer approach to ensure vertex
consistency.  Most of these methods relies on non-convex optimization as convex relaxation
is not trivial~\citep{fogel2013convex,swoboda2019convex} and may lead to inappropriate
solutions in some cases~\citep{aflalo2015convex,lyzinski2015graph}.  While there is now
many works using deep learning
methods~\citep{fey2020deepgraph,wang2020learning,rolinek2020deep,wang2021neural,liu2021stochastic}
for graph matching only a few tackle the multi-graph matching problem.

In this paper we focus on the unsupervised version of the multigraph matching problem
that is faced in the many applications where no ground truth labeling is available.
We propose an extended version of the framework proposed for
KerGM~\citep{zhang2019kergm} for multigraph matching with an appropriate projection to
fulfill the constraints. The induced optimization problem is solved using recent
non-convex optimization scheme which guarantees convergence under mild condition.  Results
on classic data set show promising results compared to other state-of-art unsupervised
methods.

Our main contributions are:
\begin{itemize}
\item An unsupervised multigraph matching method based on Lawler's QAP which allows kernel over both edge and
  vertex attributes while keeping a tractable computer memory load.
\item A flexible algorithm which allows a set of projection power to find \textit{cycle consistent} permutation matrices.
\item Experiments on real data-sets demonstrating that our approach is competitive with state-of-art results.
\end{itemize}

\section{Preliminaries about array operations and multi-graph matching}

Let $\{G_i\}_{i=1\ldots n}$ be a set of $n$ graphs with
$\forall i, G_i = \{V_i, E_i \subseteq V_i \times V_i, L^v, L^e\}$, i.e. a graph is
defined by a set of vertices, a set of edges and two functions:
$L^v : V \mapsto \mathbb{R}^{d_v}$ which gives the data vector associated to a vertex and
$L^e : E \mapsto \mathbb{R}^{d_e}$ which gives the data vector associated to an edge. We
also define $\boldsymbol{\mathrm{I}}_{m}$ as the identity matrix of size $m$ and
$\mathbb{1}_{m}$ the vector of $1$ of size $m$. For a matrix
$X \in \mathbb{R}^{n\times n}$, we denote for $i, j \in \{1,\ldots,n\}$ by $X[i,j]$ the
scalar at line $i$ and column $j$, and by $X[:,j]$ we denote the vector form by column $j$
(and reciprocally for lines).  First we introduce the notion of array operations in
Hilbert space, these tools will be useful to factorize~\eqref{eq:lawler} the edge affinity
matrices.

\subsection{Array operations in Hilbert space}\label{sec:array}

We propose a rapid and simplified introduction to $\mathcal{H}$-operations arrays in
Hilbert spaces presented in~\cite{zhang2019kergm}. Let
$\Phi, \Psi \in \mathbb{R}^{d\times m \times m}$ be 3d-arrays (or 3d-matrices). For
$\Phi \in \mathbb{R}^{d\times m \times m}$, we denote by $\Phi[l,i,j] \in \mathbb{R}$ the scalar at coordinates $l, i, j$ and by $\Phi[:,i,j] \in \mathbb{R}^d$
the vector at the coordinates $i, j$. $\forall X \in \mathbb{R}^{m \times m}$ we define
the following operations,
\begin{enumerate}
\item $\Phi^T \in \mathbb{R}^{d\times m \times m}$ with
  $\forall i, j,\ \Phi[:,i,j] = \Phi[:,j,i]$ (i.e. the transpose operation along the
  two last axis).
\item $\Phi * \Psi \in \mathbb{R}^{m \times m}$ where $\forall i, j$,
  \begin{align*}
    [\Phi * \Psi][i,j] = \sum_{k=1}^m \left\langle \Phi[:,i,k], \Psi[:,k,j] \right\rangle = \sum_{k=1}^m \sum_{l=1}^d \Phi[l,i,k] \Psi[l,k,j]~.
  \end{align*}
\item   $\forall X \in \mathbb{R}^{d \times m \times m}$,  $\Phi \odot X \in \mathbb{R}^{m \times m}$ where  $\forall i, j$,
  \begin{align*}
    [\Phi \odot X][i,j] = \sum_{k=1}^m \Phi[:,i,k] X[k,j] = \sum_{k=1}^m X[k,j] \Phi[:,i,k]~.
  \end{align*}
  Similarly, we define $X \odot \Phi \in \mathbb{R}^{m \times m}$ where,
  \begin{equation*}
  \forall i, j,\ [X \odot \Phi][i,j] = \sum_{k=1}^m  X[i,k] \Phi[:,k,j]~.
  \end{equation*}
\end{enumerate}
The last operator can be read as a parallelized matrix product over the first axis.

We recap several results from~\cite{zhang2019kergm} with the following proposition,
\begin{proposition}[\citet{zhang2019kergm}]\label{prop:form}
  Define the function
  $\left\langle .,. \right\rangle : \mathbb{R}^{d \times m \times m} \times \mathbb{R}^{d
    \times m \times m} \to \mathbb{R}$ such that
  $\forall \Phi, \Psi \in \mathbb{R}^{d\times m \times m}$,
  $\left\langle \Phi, \Psi \right\rangle = \mathrm{tr}(\Phi^T * \Psi)$. Then
  $\left\langle .,. \right\rangle$ is a inner product on
  $\mathbb{R}^{d \times m \times m}$ and the following equalities are true,
  \begin{itemize}
  \item Let $\Phi, \Psi \in \mathbb{R}^{d\times m \times m}$, we have
    $\forall X \in \mathbb{R}^{m \times m}$,
    $\left\langle \Phi \odot X, \Psi \right\rangle = \left\langle \Phi, \Psi \odot X^T
    \right\rangle$ and $\left\langle X \odot \Phi,  \Psi \right\rangle = \left\langle \Phi, X^T \odot \Psi
    \right\rangle$~.
  \item $\forall X, Y \in \mathbb{R}^{m \times m}$,
    $\Phi \in \mathbb{R}^{d\times m \times m}$, we have, $\Phi \odot X \odot Y = \Phi \odot (XY)$, and
    $Y \odot (X \odot \Phi) = (YX) \odot \Phi$~.
  \item Let $\Phi, \Psi \in \mathbb{R}^{d\times m \times m}$, we have
    $\forall X \in \mathbb{R}^{m \times m}$,
    $\left\langle \Phi, \Psi \odot X \right\rangle = \left\langle \Psi^T * \Phi, X
    \right\rangle$ and
    $\left\langle \Phi, X \odot \Psi \right\rangle = \left\langle \Phi * \Psi^T, X
    \right\rangle$~.
  \end{itemize}
\end{proposition}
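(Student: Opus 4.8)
The plan is to verify each claimed identity directly from the definitions of the $\mathcal{H}$-operations given above, essentially by unfolding sums over indices and the Hilbert-space inner product, and reorganizing the order of summation. First I would establish that $\langle \Phi, \Psi \rangle = \mathrm{tr}(\Phi^T * \Psi)$ is an inner product: bilinearity and symmetry follow immediately from the definitions of $*$ and the transpose operation (symmetry uses that $[\Phi^T * \Psi][i,j] = \sum_k \langle \Phi[:,k,i], \Psi[:,k,j]\rangle$, which is symmetric in $\Phi,\Psi$ after taking the trace), and positive-definiteness follows because $\langle \Phi,\Phi\rangle = \sum_{i,k}\|\Phi[:,k,i]\|^2 \geq 0$ with equality iff every fiber vanishes. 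So one gets $\langle \Phi, \Psi\rangle = \sum_{i,k}\langle \Phi[:,k,i], \Psi[:,k,i]\rangle = \sum_{i,k,l} \Phi[l,k,i]\Psi[l,k,i]$, which is the natural Euclidean inner product on $\mathbb{R}^{d\times m\times m}$; all the remaining identities can then be checked at the level of these triple sums.

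Next I would treat the three bulleted families of identities in turn. For the adjoint-type identities such as $\langle \Phi \odot X, \Psi\rangle = \langle \Phi, \Psi \odot X^T\rangle$, I would write the left side as $\sum_{i,j}\langle [\Phi\odot X][:,i,j],\, \Psi[:,i,j]\rangle$ — wait, more carefully: $\langle \Phi\odot X,\Psi\rangle$ where $\Phi \odot X \in \mathbb{R}^{m\times m}$ is a matrix, so here the inner product must be the matrix trace inner product $\mathrm{tr}((\Phi\odot X)^T \Psi)$ paired appropriately; I would expand $[\Phi\odot X][i,j] = \sum_k \langle \Phi[:,i,k], X[k,j]\cdot(\text{something})\rangle$ using the definition, then contract against $\Psi$ and relabel indices to recognize $\Psi \odot X^T$. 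The key algebraic move throughout is simply swapping which index the matrix $X$ (or $X^T$) is summed against, exactly as in the scalar identity $\mathrm{tr}((AX)^T B) = \mathrm{tr}(A^T (BX^T))$. For the associativity identities $\Phi \odot X \odot Y = \Phi\odot(XY)$ and $Y\odot(X\odot\Phi) = (YX)\odot\Phi$, I would expand both sides into double sums over the contracted indices and use the definition of matrix multiplication $[XY][i,j] = \sum_k X[i,k]Y[k,j]$; these are routine reorderings of finite sums. For the third family, $\langle \Phi, \Psi \odot X\rangle = \langle \Psi^T * \Phi, X\rangle$, I would expand the left side as a triple sum $\sum_{i,j,l}\Phi[l,i,j]\cdot[\Psi\odot X][l,i,j]$... again being careful about what lives in $\mathbb{R}^{d\times m\times m}$ versus $\mathbb{R}^{m\times m}$: note $\Psi \odot X$ is a matrix of shape $m\times m$, so $\langle \Phi, \Psi\odot X\rangle$ here implicitly means something like $\langle \Phi * (\cdot), \cdot\rangle$ — I would read off from the intended right-hand side $\langle \Psi^T * \Phi, X\rangle$ (a matrix inner product, since $\Psi^T * \Phi \in \mathbb{R}^{m\times m}$) that the left side is to be interpreted consistently, and then verify the equality by expanding both to $\sum_{i,j,k,l}\Psi[l,k,i]\Phi[l,k,j]X[i,j]$ or the appropriate such contraction.

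The main obstacle here is not depth of argument but bookkeeping: the notation overloads $\langle\cdot,\cdot\rangle$ to mean both the 3d-array inner product and the matrix (trace) inner product, and the operators $*$, $\odot$ change the array rank, so one must be scrupulous about which object lives in $\mathbb{R}^{d\times m\times m}$ and which in $\mathbb{R}^{m\times m}$ at every step, and track which of the $m$-indices is being contracted. Once the indexing conventions are pinned down, each identity reduces to a one-line reindexing of a finite multiple sum, with the transpose operations accounting for the swaps $i\leftrightarrow j$ and the $\odot$/$*$ definitions supplying which index is summed. I would therefore organize the proof as: (i) identify $\langle\cdot,\cdot\rangle$ on $\mathbb{R}^{d\times m\times m}$ with the flat Euclidean product and note positive-definiteness; (ii) record the one-line index expansions of $\Phi*\Psi$, $\Phi\odot X$, $X\odot\Phi$; (iii) prove each of the six identities by substituting these expansions and relabeling. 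Since these are all stated as results recapped from~\cite{zhang2019kergm}, it is reasonable to present this as a verification sketch rather than a fully expanded computation.
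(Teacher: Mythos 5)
The paper itself offers no proof of this proposition: it is explicitly a recap of results imported from \citet{zhang2019kergm}, so there is no in-paper argument to compare yours against. Judged on its own terms, your verification-by-index-expansion is the right and essentially only sensible route, and every step you describe does go through: the identification of $\langle\Phi,\Psi\rangle=\mathrm{tr}(\Phi^T*\Psi)$ with the flat Euclidean product $\sum_{i,k,l}\Phi[l,k,i]\Psi[l,k,i]$ is correct and immediately gives symmetry, bilinearity and positive-definiteness, and each of the six identities then reduces to a relabeling of a finite multiple sum.

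The one point you should pin down rather than hedge on is the type of $\Phi\odot X$. The paper's line ``$\Phi\odot X\in\mathbb{R}^{m\times m}$'' is a typo: the displayed formula $[\Phi\odot X][i,j]=\sum_k\Phi[:,i,k]X[k,j]$ produces a $d$-vector at each $(i,j)$, so $\Phi\odot X$ and $X\odot\Phi$ live in $\mathbb{R}^{d\times m\times m}$ (a slice-wise matrix product), while $\Phi*\Psi$ genuinely lives in $\mathbb{R}^{m\times m}$. You flag this tension (``wait, more carefully\dots'') but your first instinct --- reading $\langle\Phi\odot X,\Psi\rangle$ as a matrix trace pairing --- would make the expression ill-typed, since $\Psi$ is a 3d array; only the 3d-array reading makes the first bullet meaningful, and only then does the computation $\langle\Phi\odot X,\Psi\rangle=\sum_{i,j,k}X[k,j]\langle\Phi[:,i,k],\Psi[:,i,j]\rangle=\langle\Phi,\Psi\odot X^T\rangle$ close. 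Likewise in the third bullet the left side is the 3d-array inner product and the right side is the Frobenius product on $\mathbb{R}^{m\times m}$; both expand to $\sum_{i,j,k}X[k,j]\langle\Psi[:,i,k],\Phi[:,i,j]\rangle$. With that single convention fixed up front, your sketch is complete and correct; without it, a reader following your ``matrix trace inner product'' detour would hit a dimension mismatch.
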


\subsection{Building the optimization problem}

We assume that the graphs have the same number of vertices $m>0$, we add dummy vertices if
necessary. We define the constraints on cycle consistency in
Definition~\ref{def:cycle}.

\begin{definition}[Cycle-consistency~\citep{bernard2019hippi}]\label{def:cycle}
  Let $\mathcal{X} = \{X_{i,j} \in \mathcal{P}_{m, m}\}^n_{i,j=1}$ be the set of pairwise
  matchings in a collection of $n$ objects, where each $X_{i,j}$ is an element of the set
  of partial permutation matrices\footnote{All inequalities are element-wise.},
  \begin{equation}
    \label{eq:permset}
    \mathcal{P}_{m, q} = \{ X \in \{0, 1\}^{m \times q}\ :\ X\mathbb{1}_q \leq \mathbb{1}_m,   X^T\mathbb{1}_m \leq \mathbb{1}_q \}~.
  \end{equation}
  
  The set $\mathcal{X}$ is said to be cycle-consistent if for all
  $i, j, l \in \{1,\ldots, k\}$ it holds that: (i)
  $X_{i,i} = \boldsymbol{\mathrm{I}}_{m}$ (identity matching), (ii) $X_{i,j} = X^T_{j,i}$ (symmetry),
  and (iii) $X_{i,j} X_{j,l} \leq X_{i,l}$ (transitivity).
\end{definition}
The inequalities in this definition allows graphs of different sizes.

From Definition~\ref{def:cycle},~\cite{bernard2019hippi} formalized the links with the universe
of vertices with the following lemma,
\begin{lemma}[Cycle-consistency, universe points~\citep{bernard2019hippi}]\label{lem:cycle}
  The set $\mathcal{X}$ of pairwise matchings is cycle-consistent, if there exists a
  collection of rank $r$
  $\{X_l \in \mathcal{P}_{m, r}\ : \ X_l \mathbb{1}_r = \mathbb{1}_{m} \}^n_{l=1}$, such that
  for each $X_{i,j} \in \mathcal{X}$ it holds that $X_{i,j} = X_i X_j^T$.
\end{lemma}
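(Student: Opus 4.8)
The plan is to verify directly that the three defining properties of cycle-consistency (identity, symmetry, transitivity) all follow from the assumed factorization $X_{i,j} = X_i X_j^T$ together with the constraints $X_l \in \mathcal{P}_{m,r}$ and $X_l \mathbb{1}_r = \mathbb{1}_m$. First I would record what the constraints on a single $X_l$ buy us: since $X_l \in \{0,1\}^{m\times r}$ with $X_l^T\mathbb{1}_m \le \mathbb{1}_r$ and the row-sum condition $X_l\mathbb{1}_r = \mathbb{1}_m$, each row of $X_l$ has exactly one nonzero entry, equal to $1$, and each column has at most one. Hence $X_l^T X_l$ is a diagonal $0/1$ matrix (the indicator of which universe points are used by graph $l$), and crucially $X_l X_l^T = \boldsymbol{\mathrm{I}}_m$, because row $i$ of $X_l X_l^T$ has a $1$ in position $i$ (the unique universe label of vertex $i$ coincides with itself) and nowhere else (distinct vertices map to distinct universe points, by the column constraint).

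With that lemma on hand the three points are almost immediate. For (i), $X_{i,i} = X_i X_i^T = \boldsymbol{\mathrm{I}}_m$ by the identity just established. For (ii), $X_{j,i}^T = (X_j X_i^T)^T = X_i X_j^T = X_{i,j}$. For (iii), write $X_{i,j} X_{j,l} = X_i X_j^T X_j X_l^T$; since $X_j^T X_j$ is a diagonal $0/1$ matrix it satisfies $X_j^T X_j \le \boldsymbol{\mathrm{I}}_r$ entrywise, and all matrices in sight are nonnegative, so $X_i X_j^T X_j X_l^T \le X_i X_l^T = X_{i,l}$ entrywise. One should also check that each $X_{i,j}$ obtained this way actually lies in $\mathcal{P}_{m,m}$: $X_i X_j^T$ is a product of nonnegative matrices hence nonnegative, its entries are in $\{0,1\}$ because it is a partial permutation pattern (at most one $1$ per row and per column, inherited from $X_i$ and $X_j$), and the row/column sum inequalities follow from $X_i\mathbb{1}_r = \mathbb{1}_m$, $X_j^T\mathbb{1}_m \le \mathbb{1}_r$.

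The only mildly delicate point — and the one I would write out most carefully — is the entrywise inequality in (iii): it relies on the fact that inserting the diagonal projector $X_j^T X_j$ between two nonnegative matrices can only shrink entries, which in turn needs $X_j^T X_j \preceq \boldsymbol{\mathrm{I}}_r$ in the entrywise (not merely Loewner) sense; this is exactly where the column constraint $X_j^T\mathbb{1}_m \le \mathbb{1}_r$ is used. Everything else is bookkeeping with $0/1$ matrices, so I do not expect any real obstacle; the proof is short and essentially a direct computation once the single-factor facts are isolated.
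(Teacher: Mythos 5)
The paper states this lemma by citation to \cite{bernard2019hippi} and gives no proof of its own, so there is nothing internal to compare against; your direct verification is correct and is essentially the standard argument. The key single-factor facts ($X_l X_l^T = \boldsymbol{\mathrm{I}}_m$ from the row-sum equality plus the column constraint, and $X_l^T X_l$ being a $0/1$ diagonal matrix, hence entrywise $\leq \boldsymbol{\mathrm{I}}_r$) are exactly what is needed, and your handling of the entrywise inequality in the transitivity step is the right delicate point to isolate.
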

Notice that the universe of vertices can be seen as a discrete embedding space for vertices.

From Lemma~\ref{lem:cycle} and using matrix factorization, we build our constraint set as follows,
\begin{equation}
  \label{eq:constraints}
  \mathcal{C}_r = \{ X \in \mathbb{R}^{nm\times nm}\ : \  X_{i,i} = \boldsymbol{\mathrm{I}}_m, X_{i,j} = X^T_{j,i} = X_i X_j^T \text{ with } X_i \in \mathcal{P}_{m,r} \}~.
\end{equation}
This formulation implies that $X\in\mathcal{C}_r$ is of rank $r$.

Let $K^{v}\in\mathbb{R}^{mn\times mn}$ be the full vertex affinity matrix built using a kernel on
vertices where each block $K^{v}_{i,j} \in \mathbb{R}^{m \times m}$ is the Gram matrix of a
vertex kernel applied on the data vectors (through $L^v$) on vertices of $G_i$ and $G_j$.  Let
$K^{e}\in\mathbb{R}^{mn^2\times mn^2}$ be the full edge affinity matrix built using a
kernel on edges. Using the Lawler's QAP, we aim to solve,
\begin{equation}
  \label{eq:fullqap}
  \max_{X\in\mathcal{C}_r} {\mathrm{vec}(X)}^{T} K^{e} \mathrm{vec}(X) + \mathrm{tr}(K^{v} X)~.
\end{equation}

\section{Proposed approach}

The limitation with~\eqref{eq:fullqap} is the size of $K^{e}$ which may be very large.  In
order to address this limitation, we use the framework proposed in Section~\ref{sec:array}
(following the ideas from~\cite{zhang2019kergm}) to rewrite $K^{e}$ by factorizing it using a
array $\Phi$ such that,
\begin{equation}
    \label{eq:rewrite}
    {\mathrm{vec}(X)}^{T} K^{e} \mathrm{vec}(X) = \left\langle \Phi \odot X, X \odot \Phi \right\rangle~.
\end{equation}

To build $\Phi$, we need to define it for each graph. For graph $G_k$,  let $\Phi_{k}^{d \times m \times m}$ be defined as follows,
\begin{equation}
  \label{eq:phi}
  \Phi_{k} [i, j] =
  \begin{cases}
    L^e(v_i, v_j) = \phi_{i, j} \in \mathbb{R}^d\quad &\text{if } (v_i, v_j) \in E_{k}~, \\
    0 \quad &\text{otherwise}~.
  \end{cases}
\end{equation}
$\phi_{i, j}$ is the data vector on the edge $(v_i, v_j)$. There many way to build these
vectors, for example Gaussian kernels can be approximated using Random Fourier
Features~\citep{rahimi2007random} (see also \citep{liu2021random} for other kernel
approximations). Then  $\Phi \in \mathbb{R}^{d \times nm \times nm}$ is given by the 3d-array
with the $\Phi_k$ on the diagonal,
\begin{equation}
  \label{eq:bigphi}
  \Phi = \left(\begin{array}{ccc} \Phi_1 & & 0  \\ & \Phi_2 & \\  0 & & \ddots \end{array}\right)
\end{equation}
This formulation has the advantage to be more compact than manipulating the full edge
affinity matrix, since the size of $\Phi$ is $d \times nm \times nm$ compare to
$(nm)^2 \times (nm)^2$ for $ K^{e}$.

Then we reformulate~\eqref{eq:fullqap} as
\begin{equation}
  \label{eq:newqap}
  \max_{X\in\mathcal{C}_r}  J(X) = \left\langle \Phi \odot X, X \odot \Phi \right\rangle + \mathrm{tr}(K^{v} X)~.
\end{equation}
This optimization problem is non-convex because of the two constraints (rank and
permutation), we need then a dedicated scheme to solve it.

\subsection{Optimization scheme}

Since~\eqref{eq:newqap} is a quadratic optimization problem with non-convex constraints,
we use a power optimization scheme. This scheme has the advantages to have almost no
hyper-parameters and show rather fast convergence rate and the non-convex constraints
are deal with a projector which gives an approximate solution. First we need the gradient of the objective function in~\eqref{eq:newqap}.

\begin{proposition}\label{prop:grad}
  The gradient of $J$ in~\eqref{eq:newqap} at $X$ is,
  \begin{equation}
    \label{eq:gradient}
    \nabla J(X) = K^{v} + (\Phi \odot X) * \Phi^T + \Phi^T * (X \odot \Phi)~.
  \end{equation}
\end{proposition}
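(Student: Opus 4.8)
The plan is to compute the gradient of $J(X) = \langle \Phi \odot X, X \odot \Phi \rangle + \mathrm{tr}(K^v X)$ term by term, exploiting the inner-product identities collected in Proposition~\ref{prop:form}. The second term is linear, so its gradient is immediately $K^v$ (using $\mathrm{tr}(K^v X)$ as a bilinear pairing of $K^v$ with $X$, whose derivative is $(K^v)^T$ in general; I would note that in the symmetric/standard convention here it is written as $K^v$ — the paper is implicitly identifying $\mathrm{tr}(AX)$ with $\langle A^T, X\rangle$, so I must be careful which transpose convention makes the stated formula exact, and state it). The first term is quadratic in $X$, so I expect its gradient to be the sum of two pieces obtained by differentiating in each occurrence of $X$ and symmetrizing.

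Concretely, I would write $Q(X) = \langle \Phi \odot X, X \odot \Phi\rangle$ and compute the directional derivative $\left.\frac{d}{dt}\right|_{t=0} Q(X + tH)$ for an arbitrary $H \in \mathbb{R}^{nm\times nm}$. By bilinearity of $\langle\cdot,\cdot\rangle$ and linearity of $\odot$ in its matrix argument, this equals $\langle \Phi \odot H, X\odot\Phi\rangle + \langle \Phi\odot X, H\odot\Phi\rangle$. Now I push both $H$'s out to one side using the adjunction identities from Proposition~\ref{prop:form}: from $\langle \Phi\odot H, \Psi\rangle = \langle \Phi, \Psi\odot H^T\rangle$-type rules together with $\langle \Phi, \Psi\odot X\rangle = \langle \Psi^T * \Phi, X\rangle$ and $\langle \Phi, X\odot\Psi\rangle = \langle \Phi * \Psi^T, X\rangle$, each of the two terms can be rewritten as $\langle (\text{something built from } \Phi \text{ and } X), H\rangle$. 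I anticipate the first term becomes $\langle \Phi^T * (X\odot\Phi), H\rangle$ and the second becomes $\langle (\Phi\odot X)*\Phi^T, H\rangle$ (or the transposes thereof, depending on how the transpose of the $*$-product interacts — I would double-check using $(\Phi^T * \Psi)^T = \Psi^T * \Phi$ which follows from the definition of $*$ and of the array transpose). Summing, the derivative is $\langle \Phi^T*(X\odot\Phi) + (\Phi\odot X)*\Phi^T,\, H\rangle$, and since this holds for all $H$, that matrix is $\nabla Q(X)$; adding $K^v$ gives~\eqref{eq:gradient}.

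The main obstacle I foresee is purely bookkeeping: the $\odot$ and $*$ operations are not symmetric and there are left- and right-acting versions of $\odot$, so I must track which side each $X$ sits on and apply exactly the matching identity from Proposition~\ref{prop:form} (the left-action adjunction $\langle X\odot\Phi,\Psi\rangle = \langle\Phi, X^T\odot\Psi\rangle$ versus the right-action one), and I must get the transposes inside the $*$-products right so that the final expression is literally $(\Phi\odot X)*\Phi^T + \Phi^T*(X\odot\Phi)$ rather than some transposed variant. A secondary subtlety is whether the two quadratic terms are actually equal (so the gradient would be $2\,\Phi^T*(X\odot\Phi)$) or genuinely distinct; I expect they are distinct in general because $\Phi\odot X \ne X\odot\Phi$, and the block-diagonal structure of $\Phi$ does not force equality, so the two-term form is the correct one. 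Once the adjunctions are lined up correctly the computation is short and mechanical, so I would present it as: (1) linear term; (2) directional derivative of $Q$; (3) move $H$ to the right in each term via Proposition~\ref{prop:form}; (4) read off the gradient; (5) conclude.
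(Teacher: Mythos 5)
Your proposal is correct and follows essentially the same route as the paper: split $J$ into the linear trace term and the quadratic term, expand the directional derivative of the quadratic term by bilinearity to get $\langle \Phi\odot X, E\odot\Phi\rangle + \langle \Phi\odot E, X\odot\Phi\rangle$, and move the perturbation to one side via the adjunction identities of Proposition~\ref{prop:form} to read off $(\Phi\odot X)*\Phi^T + \Phi^T*(X\odot\Phi)$. Your anticipated assignment of which term yields which summand matches the paper's computation exactly, and your side remarks on transpose conventions are reasonable caution that the paper simply elides.
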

\begin{proof}
  Let $J = J_1 + J_2$ with
  $J_1(X) = \left\langle \Phi \odot X, X \odot \Phi \right\rangle$ and
  $J_2(X) = \mathrm{tr}(K^{v} X)$. By definition of the gradient we have,
  \begin{equation*}
    \left\langle \nabla J_1(X), E \right\rangle = \lim_{t \to 0} \frac{1}{t} (J_1(X + tE) - J_1(X))~.
  \end{equation*}
  Using Proposition~\ref{prop:form} we have,
  \begin{align*}
    J_1(X+tE) &=  \left\langle \Phi \odot (X+tE), (X+tE) \odot \Phi \right\rangle~, \\
              &=  \left\langle \Phi \odot X+ t  \Phi \odot E, X \odot \Phi + t E \odot \Phi \right\rangle~, \\
              &=  \left\langle \Phi \odot X,  X \odot \Phi \right\rangle + t \left\langle  \Phi \odot E, X \odot \Phi + t E \odot \Phi \right\rangle~, \\
              &= \left\langle \Phi \odot X,  X \odot \Phi \right\rangle + t  \left\langle \Phi \odot X,  E \odot \Phi \right\rangle + t  \left\langle \Phi \odot E,  X \odot \Phi \right\rangle +\\
              &\qquad t^2 \left\langle \Phi \odot E,  E \odot \Phi \right\rangle~.
  \end{align*}
  From the gradient definition we yield,
  \begin{align*}
    \left\langle \nabla J_1(X), E \right\rangle &=  \left\langle \Phi \odot X,  E \odot \Phi \right\rangle + \left\langle \Phi \odot E,  X \odot \Phi \right\rangle~, \\
    &=  \left\langle  (\Phi \odot X) * \Phi^T + \Phi^T * (X \odot \Phi), E  \right\rangle~.
  \end{align*}
  By identification we have $\nabla J_1(X) = (\Phi \odot X) * \Phi^T + \Phi^T * (X \odot \Phi)$.
  As the gradient of $J_2(X) = \mathrm{tr}(K^{v} X)$ is $\nabla J_2(X) = K^{v}$, we produce~\eqref{eq:gradient}.
\end{proof}

With the gradient we can solve~\eqref{eq:newqap} using a power method scheme. The full method is
described in Algorithm~\ref{algo:power}. Since the gradient is a proxy for the quadratic
operator, this scheme is closely related to HiPPI~\citep{bernard2019hippi}. Assuming that
we are able to solve the projection step, the scheme converges to a stationary point.

\begin{proposition}
  Algorithm~\ref{algo:power} produces a monotone sequence and converges to a stationary point
  in a finite number of steps.
\end{proposition}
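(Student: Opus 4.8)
The statement asserts two things about Algorithm~\ref{algo:power}: that the objective values form a monotone sequence, and that the iteration reaches a stationary point after finitely many steps. The plan is to exploit the standard structure of a projected power (a.k.a.\ conditional-gradient-type / iterative) scheme on a compact constraint set.

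First I would establish monotonicity. The power step updates the iterate by projecting the gradient $\nabla J(X^{(t)})$ onto $\mathcal{C}_r$ — that is, $X^{(t+1)} = \mathrm{proj}_{\mathcal{C}_r}(\nabla J(X^{(t)}))$ — which by definition of the projection maximizes $\langle \nabla J(X^{(t)}), X \rangle$ (or the appropriate surrogate) over $X \in \mathcal{C}_r$. Since $J$ is a convex quadratic form (the edge term $\langle \Phi \odot X, X \odot \Phi\rangle$ is a positive semidefinite quadratic in $X$ by Proposition~\ref{prop:form}, as it equals $\mathrm{vec}(X)^T K^e \mathrm{vec}(X)$ with $K^e$ a Gram matrix, hence PSD), we have the first-order convexity inequality $J(X^{(t+1)}) \geq J(X^{(t)}) + \langle \nabla J(X^{(t)}), X^{(t+1)} - X^{(t)}\rangle$. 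Because $X^{(t)} \in \mathcal{C}_r$ is feasible and $X^{(t+1)}$ is the maximizer of the linear functional $\langle \nabla J(X^{(t)}), \cdot\rangle$ over $\mathcal{C}_r$, the inner-product term is nonnegative, giving $J(X^{(t+1)}) \geq J(X^{(t)})$. This is the key place where convexity of $J$ is used, so I would state the PSD property of $K^e$ explicitly as a lemma-level remark.

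Next I would argue finite termination. The constraint set $\mathcal{C}_r$ is finite: each $X \in \mathcal{C}_r$ is determined by the collection $\{X_i \in \mathcal{P}_{m,r}\}$, and $\mathcal{P}_{m,r}$ is a finite set of $0/1$ matrices. Hence the sequence $(J(X^{(t)}))_t$ takes values in a finite set and is monotone non-decreasing, so it must stabilize; once $J(X^{(t+1)}) = J(X^{(t)})$, combining with the convexity inequality forces $\langle \nabla J(X^{(t)}), X^{(t+1)} - X^{(t)}\rangle = 0$, i.e.\ $X^{(t)}$ already maximizes the linearized objective over $\mathcal{C}_r$ and is therefore a stationary point (a fixed point of the projected-power map, satisfying the first-order optimality condition $\langle \nabla J(X^{(t)}), X - X^{(t)}\rangle \leq 0$ for all $X \in \mathcal{C}_r$). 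Since distinct values cannot repeat indefinitely without the value stabilizing, the number of strictly-increasing steps is bounded by $|\mathcal{C}_r|$, which is finite.

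The main obstacle is making precise what ``stationary point'' means given that the projection onto $\mathcal{C}_r$ is only solved \emph{approximately} (as the surrounding text admits). If the projector is exact, the argument above is clean. If it is inexact, I would instead define stationarity relative to the realized fixed point of the algorithmic map and note that monotonicity still holds provided the (approximate) projection step never decreases the linearized objective relative to the current iterate — which is the minimal property any reasonable projector must satisfy — and that finiteness still follows from $\mathcal{C}_r$ being finite. I would flag this caveat explicitly rather than hide it, since it is the one genuinely delicate point; everything else is the textbook monotone-convergence argument for power iteration on a compact (here finite) feasible set.
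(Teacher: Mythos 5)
Your proof is correct and follows essentially the same route as the paper: the paper phrases the monotonicity step in the language of difference-of-convex (DC) programming over the convex hull of $\mathcal{C}_r$, but the underlying mechanism is exactly your first-order convexity inequality for the PSD quadratic plus the fact that the update maximizes the linearized objective over the feasible set, and finite termination likewise follows from the finiteness of $\mathcal{C}_r$. The caveats you flag --- that projecting onto $\mathcal{C}_r$ coincides with maximizing the linear functional only because $\left\langle X, X \right\rangle$ is constant on $\mathcal{C}_r$, and that the guarantee presumes the projection step is solved exactly --- are both points the paper also (more tersely) acknowledges.
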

\begin{proof}
  This proof relies on the same arguments as for HiPPI~\citep{bernard2019hippi} for its
  Proposition~3. Let $\mathcal{U} = \mathrm{conv}(\mathcal{C}_r)$ (the convex hull of the
  constraint set). We can relax~\eqref{eq:newqap} in
  \begin{align}
   \notag
    \max_{X\in\mathcal{U}}  \left\langle \Phi \odot X, X \odot \Phi \right\rangle + \mathrm{tr}(K^{v} X) &=
    \max_{X\in\mathcal{U}}  \mathrm{vec}(X)^T K^e \mathrm{vec}(X) + \mathrm{tr}(K^{v} X), \\
   \label{eq:e1}  &\equiv \min_{X\in\mathbb{R}^{nm\times mn}} \imath_{\mathcal{U}}(X) - \underbrace{\mathrm{vec}(X)^T \Tilde{K} \mathrm{vec}(X)}_{h(X)}~,
  \end{align}
  with $\imath_{\mathcal{U}}$ the convex indicator function of set
  $\mathcal{U}$ (i.e. it return 0 for element from $\mathcal{U}$ and $+\infty$ otherwise) and $\Tilde{K}$ the matrix compound of $K^e$ with the addition of $\mathrm{vec}(K^v)$ on the diagonal.
  Since~\eqref{eq:e1} is a difference of two convex functions we can apply
  difference of convex (DC) programming updates rules~\citep{le2018dc} with an initial step $X_0$,
  \begin{align*}
    V_t &= \nabla_{X}h(X) = 2\ \mathrm{unvec}(\Tilde{K} \mathrm{vec}(X_{t}))~,\\
    X_{t+1} &= \argmin_{X\in\mathbb{R}^{nm\times nm}} \imath_{\mathcal{U}}(X) - h(X_t) - \langle X - X_t, V_t \rangle~,\\
    &= \argmin_{X\in\mathcal{U}} -\langle X, V_t \rangle~,
  \end{align*}
  where $\mathrm{unvec}$ is the inverse of $\mathrm{vec}$.
  $\mathcal{U}$ is linked with the Birkhoff polytope with the permutation matrix as
  extreme points~\citep{maciel2003global,birdal2019probabilistic}. Since the maximum of the
  linear objective over a compact set is attained at its extreme points, we get,
  \begin{align*}
    X_{t+1}  &= \argmin_{X\in\mathcal{U}} -\langle X, V_t \rangle~, \\
    & =  \argmin_{X\in\mathcal{U}} \| V_t - X \|^2_F = \mathrm{proj}_{\mathcal{C}_r} (V_t)~,
  \end{align*}
  where $\langle X, X \rangle = nm$ since $X$ is formed from permutation matrices. Then we get from DC programming property that the sequence $\{X_t\}^T_{t=0}$ is a increasing
  sequence for $J$ in~\eqref{eq:newqap}. Since $\mathcal{C}_r$ is a finite set, $J$ is
  bounded above for any $X\in\mathcal{C}_r$. Furthermore as
  $\forall t > 0, X_t \in \mathcal{C}_r$, the increasing sequence $\{X_t\}^T_{t=0}$
  converges to a stationary point.
\end{proof}

\begin{algorithm2e}[htb]
  \SetAlgoLined
  \SetKwInOut{Input}{Input}\SetKwInOut{Output}{Output}
  \SetKwInOut{Initialize}{Initialize}

  \Input{Affinity vertices matrix $K^{v}$, edges' data 3d-array $\Phi$, tolerance $\tau$, maximal number of iterations $T$, rank $r$.}
  \Output{$X_{T}$ the bulk permutation matrix.}
  $X_0 \leftarrow 0$\;
  \For{$t \leftarrow 1$ \KwTo $T$}{
    $\hat{X}_t \leftarrow \nabla J(X_t)$\tcp*{Using Proposition~\ref{prop:grad}}
    $X_{t+1} \leftarrow \text{proj}_{\mathcal{C}_r} (\hat{X}_t)$\;
    \If{$\| X_{t+1} - X_t \| < \tau$}{
      break\;
    }
    $t \leftarrow t+1$\;
  }
\caption{Power method for solving~\eqref{eq:newqap}}\label{algo:power}
\end{algorithm2e}

\textbf{Complexity of Algorithm~\ref{algo:power}}: The computational complexity of the
algorithm mainly depends on both the computation of the gradient and the projection onto
the set of permutations. Since we are dealing with 3d-arrays of size $d\times m \times n$
the complexity of both $*$ and $\odot$ operations is $\mathcal{O}(d(nm)^2)$. The
complexity of the permutation depends on the selected approach to solve the
optimization. We discuss this point in the next section.

\textbf{Initialization}: As many non-convex methods our method is sensitive to the
initialization. For example methods like HiPPI~\citep{bernard2019hippi} or
GA-MGM~\citep{wang2020graduated} cannot start with an uniformly valued matrix. Since they
are working in the universe of vertices, the initialization impacts the projection step
and a method like the Hungarian method will just put the vertex in order (i.e. the vertex 1
will match other vertex 1 and so on). The algorithm ends-up being stuck in a solution where
the permutation matrices are simply the identity. To avoid such an issue, GA-MGM for
example, uses a random initialization, leading to a non deterministic optimization
method. For our method such initialization is less a problem. If we initialize with the
null matrix, the algorithm will only use the vertex affinity matrix to recover the
permutations. Such initialization is relevant if the affinity is correctly tuned and it
makes our method deterministic.

\textbf{Comparison with closely related methods}: While our method is not the first
multi-graph method to include edge attributes, it is the first to propose a tractable way
of dealing with real-world data. For example, methods like Floyd~\citep{jiang2020unifying}
or deep learning methods like NGM~\citep{wang2021neural} directly work with Lawler's
QAP~\eqref{eq:lawler}. This leads to manipulate matrices whose size are quadratic in the
number of vertices, so they only consider pairwise matching for most set of graphs. In
comparison, with the arrays, we are able to manipulate matrices whose sizes linearly
depends on the number of vertices. The cycle consistency constraints can then consider the full set
of graphs.

\subsection{Projection over the set of constraints}

Optimizing on $\mathcal{C}_r$ is a NP-Hard problem. Furthermore as mentioned in several
publications~\citep{bernard2019hippi,shi2020robust}, most of the approximation methods are
very sensitive to the initialization and some require to provide a reference graph
(mSync~\citep{pachauri2013solving} for example). Recent methods include:
\begin{description}
\item[mSync]~\citep{pachauri2013solving} uses an eigen decomposition and a reference graph
  (the first graph in the algorithm presented in the paper) combined with the Hungarian
  method to recover the permutations.
\item[MatchEIG]~\citep{maset2017practical} is an alternative to mSync with no reference
  graph but still combines eigen decomposition with the Hungarian method.  As noticed
  in~\cite{bernard2021sparse}, the cycle consistency is not guaranteed.
\item[Birkhoff-RLMC]~\citep{birdal2019probabilistic} uses a probabilistic method to
  estimate the permutation matrix from a noisy observation. This method relies on an
  optimization onto Riemannian manifold.
\item[IRGCL]~\citep{shi2020robust} combines several methods to estimate the permutations
  from a noisy permutation matrix. This method requires a reference graph for cycle
  consistency. They also estimate the projection of each vertex in the universe of
  vertices.
\item[SQAP]~\citep{bernard2021sparse} is based on QR decomposition and a power method to
  recover the permutations from a noisy permutation matrix. As in IRGCL, they estimate the
  projection of each vertex in the universe of vertices and guarantee the cycle
  consistency.
\item[Generalized power method]~\citep{ling2020improved,ling2022near} can be seen as an
  iterative extension of MatchEIG where the result is refined through a power method
  scheme. As for MatchEIG the cycle consistency is not guaranteed.
\end{description}

In this paper, we propose to use three methods: MatchEIG in the version presented in
Algorithm~\ref{algo:matcheig}, the iterative version of MatchEIG
(Algorithm~\ref{algo:gpow}) which is a generalized power method and we expect it to
produce better results~\citep{ling2020improved} and IRGCL~\citep{shi2020robust}. To avoid
the need of a reference graph, we propose to apply IRGCL combined with SQAP for solving
the inner problem of permutation estimation. We apply it on the permutation matrix
estimated by MatchEIG. While IRGCL is more accurate than MatchEIG, its complexity may
limit its scalability to large data-sets. Notice that all these methods require to have a estimation
of the rank $r$, some are more sensitive to a bad value than others~\citep{maset2017practical} and
finding easily a good guess or the optimal value remains an open question.

\begin{algorithm2e}[htb]
  \SetAlgoLined
  \SetKwData{Left}{left}\SetKwData{This}{this}\SetKwData{Up}{up}
  \SetKwInOut{Input}{Input}\SetKwInOut{Output}{Output}
   \SetKwInOut{Initialize}{Initialize}

  \Input{Perturbed bulk permutation matrix $X$, rank $r$.}
  \Output{$\hat{X}$ an estimation of the bulk permutation matrix.}
  $U \Sigma U^T \leftarrow \mathrm{EIG}(X)$\;
  $U \leftarrow U \sqrt{\Sigma}$\;
  \For{$i \leftarrow 1$ \KwTo $n$}{
   \For{$j \leftarrow 1$ \KwTo $n$}{
        $Z \leftarrow U[im:(i+1)m, 1:r] U^T[jm:(j+1)m, 1:r]$\;
        $\hat{X}_{i,j} \leftarrow \mathrm{Hungarian(Z)}$\;
    }
  }
  \caption{MatchEIG~\citep{maset2017practical} for computing an approximation of the
    permutation matrix where EIG$(X)$ is function which computes the eigen decomposition
    of $X$ and Hungarian$(Z)$ is the Hungarian method for estimating the permutations from
    $Z$.}\label{algo:matcheig}
\end{algorithm2e}

\begin{algorithm2e}[htb]
  \SetAlgoLined
  \SetKwData{Left}{left}\SetKwData{This}{this}\SetKwData{Up}{up}
  \SetKwInOut{Input}{Input}\SetKwInOut{Output}{Output}
   \SetKwInOut{Initialize}{Initialize}

  \Input{Perturbed bulk permutation matrix $X$, rank $r$, tolerance $\tau$, maximal number of iterations $T$.}
  \Output{$\hat{X}$ an estimation of the bulk permutation matrix.}
  $Z_0 \leftarrow \mathrm{MatchEIG}(X, r)$\;
  \For{$i \leftarrow 1$ \KwTo $T$}{
    $Z_t \leftarrow \mathrm{MatchEIG}(X Z_{t-1}, r)$\;
    
        \If{$\| Z_{t-1} - Z_t \| < \tau$}{
            break\;
        }
  }
  $\hat{X} \leftarrow Z_T$\;
  \caption{GPow: an adaptation of the generalized power
    method~\citep{ling2020improved,ling2022near} for the bulk permutation matrix
    estimation.}\label{algo:gpow}
\end{algorithm2e}

\textbf{Complexity:} We report here only the complexity of three methods that will be used
in Section~\ref{sec:exp}. As the version of MatchEIG presented by
Algorithm~\ref{algo:matcheig} is based on an eigen decomposition and the Hungarian method
the complexity is at least $\mathcal{O}(n^2m^3)$. SQAP is mostly based on QR decomposition
and GPow iterate MatchEIG-like methods. These methods have a complexity of
$\mathcal{O}(Tn^2m^3)$ with $T$ the maximal number of iterations. IRGCL is at least as
complex because it relies on methods like mSync at each iteration. Thus, while MatchEIG
may be the fastest method, the others may lead to better results that fully respect the
cycle consistency.

\section{Experiments}\label{sec:exp}

We extensively benchmark our method on four data-sets. In addition to a synthetic set of
graphs, we consider two multi-image data-sets that are commonly used for graph matching
comparison: Willow, PascalVOC\footnote{Some
  statistics on these sets can be found in~\cite{fey2020deepgraph}.}. We compare our
method against several state-of-art unsupervised methods: HiPPI~\citep{bernard2019hippi},
MatchEIG~\citep{maset2017practical} (without the thresholding step),
GA-MGM~\citep{wang2020graduated} and Floyd~\citep{jiang2020unifying}. We also compare the
performances of our method against three state-of-art supervised methods for pairwise
graph matching: DGMC~\citep{fey2020deepgraph}, NGM~\citep{wang2021neural} and
SIGMA~\citep{liu2021stochastic}. Both methods are based on deep learning to learn the best
representation for matching.

\textbf{Data-set management}: All the computer vision data-sets are provided by
\textit{pytorch-geometric}\footnote{\url{https://pytorch-geometric.readthedocs.io/en/latest/index.html}}. We
directly use the keypoint and their attributes (descriptor and position). For the
construction of the graphs from the keypoints, we follow the procedure
from~\cite{fey2020deepgraph} where the graphs are built using a Delaunay tessellation with
an isotropic distance. For Willow and PascalVOC, the descriptors are taken for each
keypoint from the concatenated output of \textit{relu4\_2} and \textit{relu5\_1} on a
VGG16 network trained on ImageNet~\citep{SimonyanZ14a}.

\textbf{Kernel setting}: For the synthetic graphs experiments we only use linear kernels
to avoid hyper-parameter setting. For Willow and PascalVOC, we use the Gaussian
kernel for computing the vertices weight as in~\cite{bernard2019hippi}. We follow also the
same protocol as in~\cite{bernard2019hippi} to compute the weights on the edges. The
weights are computed using a Gaussian kernel applied to the distance between the vertices
and the variance is given by the median of the minimal distances. As kernel on the edges,
we use the Random Fourier Features (RFF)~\citep{rahimi2007random} as proposed for
KerGM~\citep{zhang2019kergm}. For all experiments we take a dimension of 100 for the random
features.

\textbf{Initialization}: For all our experiments, we initialize our method using the null
matrix, which is equivalent to an initialization using the projection method on the vertex
affinity matrix.

\textbf{Dummy vertices}: For all experiments we add dummy vertices when the size of the
graphs varies. In practice, we add unconnected vertices with attribute values far from the
legitimates ones (the same for all dummy vertices). These vertices are removed before
computing the score, the vertices that are matched to a dummy vertex are then considered
as unmatched. We thus take such configuration into account when defining the score.

\textbf{Convergence setting}: For Algorithm~\ref{algo:power} we set the maximal number of
iterations to 100 and the tolerance to $10^{-2}$. The algorithm stops when one of the
conditions is met. For IRGCL we use the same setting and parameters as proposed
in~\cite{shi2020robust}. For Algorithm~\ref{algo:gpow} we set the maximal number of
iterations at 100 with a tolerance of $10^{-3}$.

\textbf{Similarity scores:} We compute the precision and recall directly from the ground
truth bulk permutation matrix $X_{truth}$. Let $X$ be the current estimated bulk
permutation matrix, the score are then computed as follows:
\begin{align}
    \notag\mathrm{precision} &= \frac{\langle X_{truth} - \boldsymbol{\mathrm{I}}, X - \boldsymbol{\mathrm{I}}\rangle}{\| X - \boldsymbol{\mathrm{I}} \|^2_F},\qquad 
    \mathrm{recall} = \frac{\langle X_{truth} - \boldsymbol{\mathrm{I}}, X  - \boldsymbol{\mathrm{I}}\rangle}{\| X_{truth}  - \boldsymbol{\mathrm{I}}\|^2_F} \\
    \label{eq:score} \mathrm{F1-score} &= \frac{2 * \mathrm{precision} * \mathrm{recall}}{\mathrm{precision} 
    + \mathrm{recall}}
\end{align}

\subsection{Synthetic data-set}\label{sec:robustness}

We use a synthetic data-set to assess the robustness of our method to different
perturbations. In this purpose we adapt the protocol
from~\cite{gold1996graduated,zhang2019kergm} where the graphs are generated from an
Erd\H{o}s-R\'enyi model. For this experiment, we generate graphs with 50 vertices and a
probability of edges of $0.05$ (also named \textit{connectivity density}
in~\cite{gold1996graduated}). For each test, we report the mean value across 20
graphs. For each graph, the attributes of the vertices and the edges are random vectors built
from the uniform distribution $\mathcal{U}(0, 1)$. The dimension of these vectors is set
to $10$.

First, we assess the sensitivity of our method with respect to noise on the
attributes. For each graph in the set we compute nine shuffled versions to make a total of
10 graphs. We add the same additive Gaussian noise on the attributes of the vertices and
edges, with different variances. We set the rank to 50, the number of vertices. We compare
our method against MatchEIG~\citep{maset2017practical} where only the vertices are taken
into account. Figure~\ref{fig:noise}(left) shows the evolution of the F1-score as a
function of the variance of the noise. We show only our results with MatchEIG as
projector, we also test with IRGCL and GPow but they yield similar results, so for
visualization purpose we selected only one method. Our method is more robust to noise
since it uses both edges and vertices. It is interesting to see that it is robust to small
and mild noise values while the performance severely degrades with higher noise
levels. Compared to MatchEIG, the robustness likely comes from taking the edges into
account, as long as the attributes on edges are reliable.

\begin{figure}[tb]
    \centering
    \includegraphics[width=0.44\linewidth]{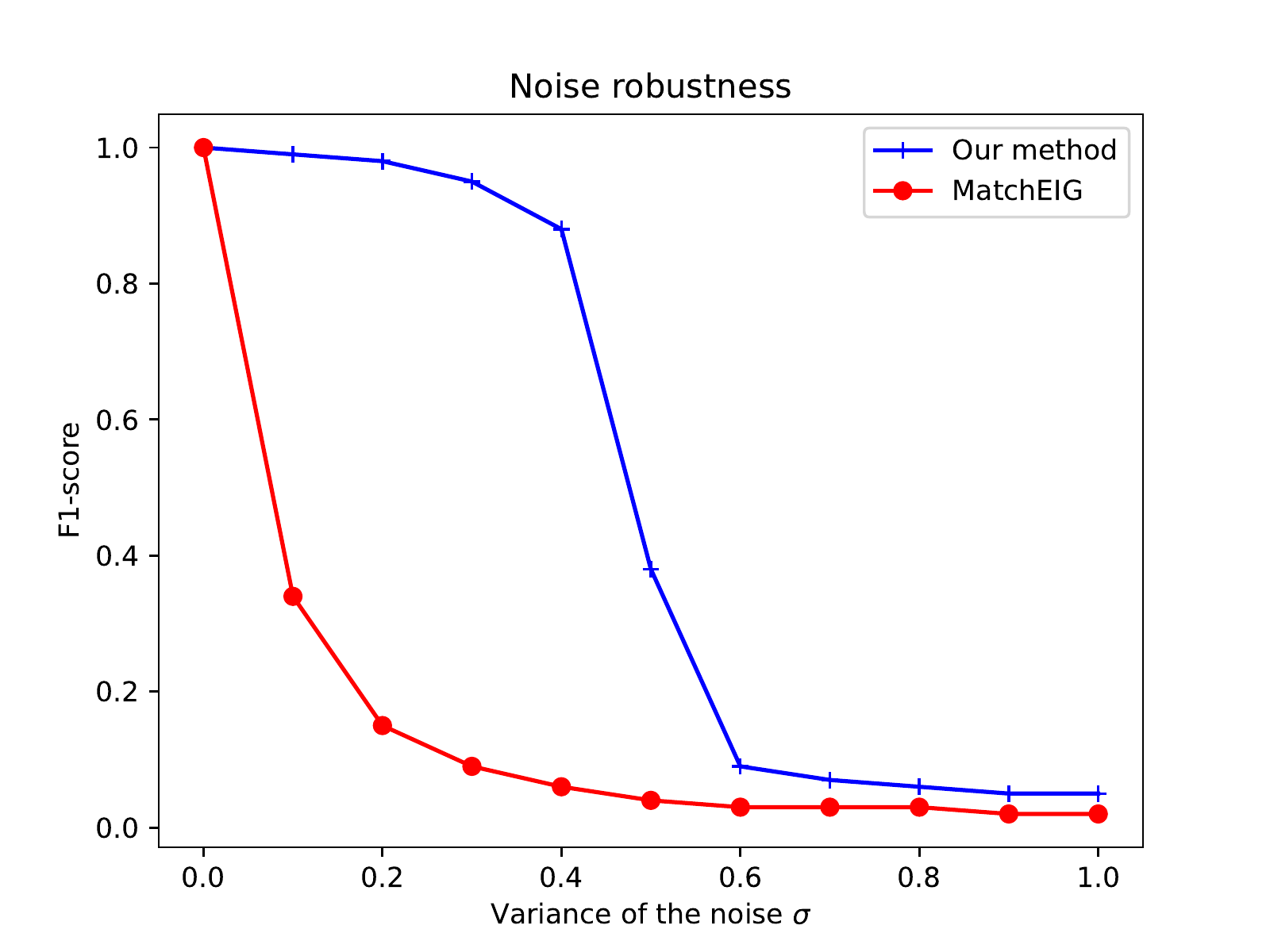}
    \includegraphics[width=0.44\linewidth]{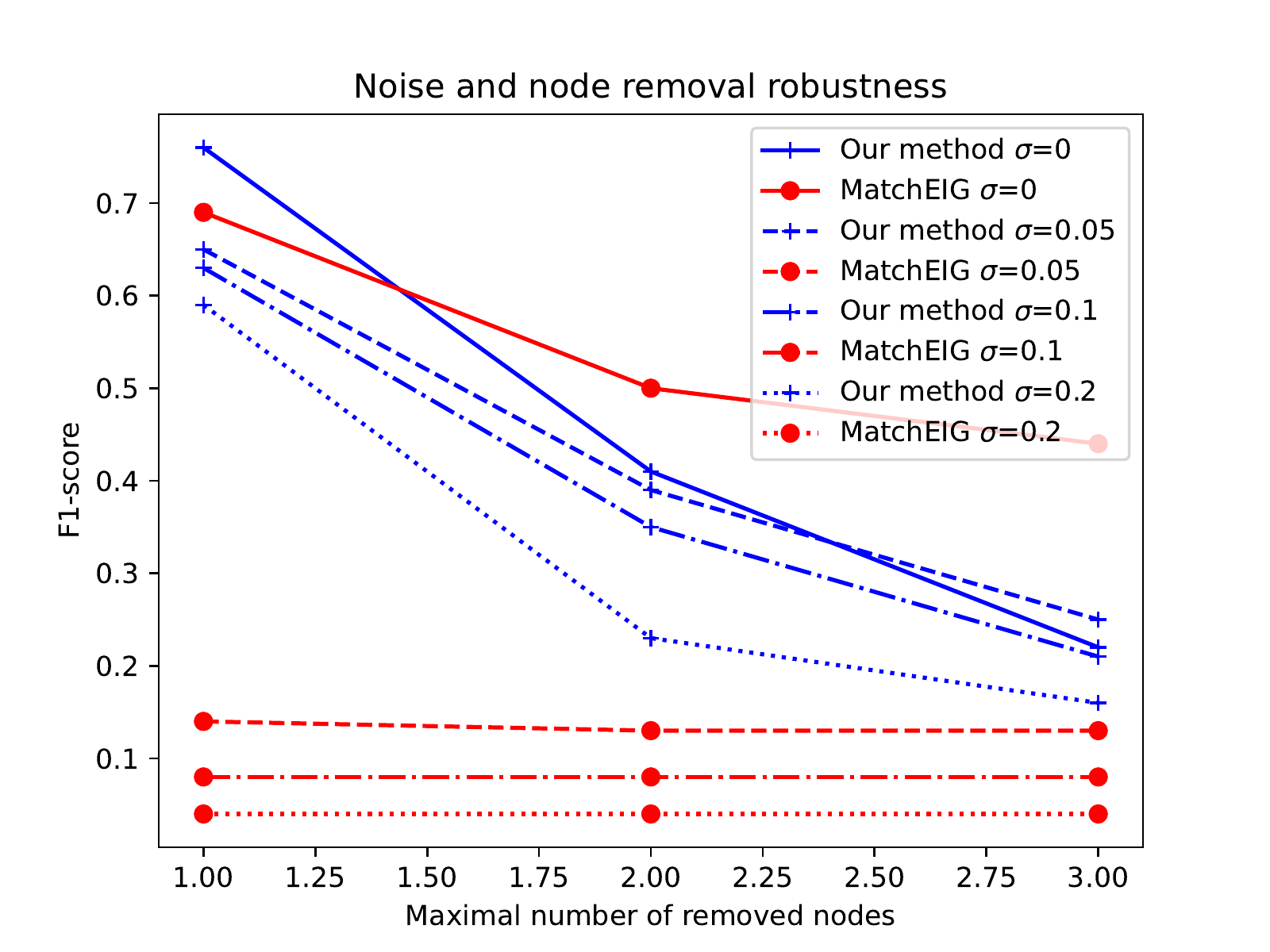}
    \caption{(left) The robustness of our method compare to MatchEIG following the
      variance of the noise. (right) The robustness of our method compare to MatchEIG
      following the maximal number of removed vertices and the variance of the noise.}
    \label{fig:noise}
\end{figure}

Secondly, we assess the robustness of our method against vertex removal. For this, we
remove vertices at random until a maximal number is reached. This means that for a given
set of graphs, the number of vertices removed is not the same for all the graphs but
cannot exceed the specified maximum. The removed vertices are disconnected from the other
vertices and transformed into dummy in order to preserve an equal number of vertices. We
also add different levels of noise on the attributes (same variance for edge and vertex
data vectors). The results are presented by Figure~\ref{fig:noise}(right). In absence of
noise, MatchEIG shows a better robustness to vertex removal. But when we add the noise,
our method performs better. Note that the matching quality degrades severely with
increased maximal number of removed vertices. This is expected since removing vertices
leads to changes in the edges and but also in the topology of the graphs.

\subsection{Willow data-set}\label{sec:willow}

Willow is an image data-set composed of 5 classes of objects. For each class we have a
minimum of 40 graphs with exactly 10 vertices. Here we use the same edge
attributes as in~\cite{bernard2019hippi} where the weights are computed using the
median distance between vertices.

In order to find an appropriate set of hyper-parameters we run a grid-search on the
variance for the Gaussian kernel on the vertices and the parameter of the RFF.
The tested values range from $10$ to $100$ for an additive step of $10$ for the vertices and from $10^{-6}$
to $1$ with a multiplicative step of $0.1$ for the edges.
For each set of values, we take at random 10 graphs from the \textit{car} class and
apply our method to estimate the F1-score. We repeat the process 10 times and report the mean of the scores. We use the set with the best mean score ($80$ for vertices and $10^{-5}$ for edges).

\begin{table}[tp]
  \centering
  {\small
  \begin{tabular}{l||c|c|c|c|c}
    Method/Class & Car & Duck & Face & Motorbike & Winebottle \\
    \hline \hline 
    MatchEIG~\citep{maset2017practical} & 23.4 & 36.1 & 47.6 & 19.4 & 21.1 \\
    HiPPI~\citep{bernard2019hippi} & 74.0 & 88.0 & 100 & 84.0 & 95.0 \\
    GA-MGM~\citep{wang2020graduated} & 74.6 & 90.0 & 99.7 & 89.2 & 93.7 \\
    Floyd-c~\citep{jiang2020unifying} & 85.0 & 79.3 & 100 & 84.3 & 93.1 \\
    DGMC~\citep{fey2020deepgraph} & 95.53 & 93.0 & 100 & 99.4 & 99.39 \\
    NGM-v2~\citep{wang2021neural} &97.4 & 93.4 & 100 & 98.6 & 98.3 \\
    \hline Our (MatchEIG) & 89.6 & 80.1 & 100 & 82.0 & 93.0 \\
    Our (GPow) & 90.3 & 78.6 & 100 & 80.7 & 96.1 \\
    Our (IRGCL) & 83.5 & 74.5 & 100 & 83.7 & 93.8 \\
  \end{tabular}
  }
  \caption{F1-score of the different approaches on the Willow data-set. DGMC and NGM-v2 are
    supervised method where the vertices attributes are learned using deep learning
    framework. The others methods are  unsupervised. We remind that the IRGCL method is
    combined with SQAD and initialized with MatchEIG result.}
  \label{tab:willow}
\end{table}

We present the results in Table~\ref{tab:willow}. Here we use three methods for the
projection onto permutations set: MatchEIG, GPow and IRGCL (with SQAD for the inner
optimization step). For all methods expect MatchEIG we report the score from the
articles. For MatchEIG we used our own implementation. The results show that we are
competitive compared to other state-of-art methods. On the projection side, IRGCL leads to
better results on \textit{motorbike} while GPow is better on \textit{winebottle}. Between
the three methods, MatchEIG gives the best average results and is a little better than
unsupervised state-of-art methods like Floyd-c~\citep{jiang2020unifying}.

\subsection{PascalVOC data-set}

PascalVOC is an image data-set composed of 20 classes of objects with key points on each image. This database is
challenging since the number of vertices vary greatly inside a class and there are also
outliers. The number of vertices for a graph can vary from 1 to 16. For this data-set with follow
the protocol from~\cite{fey2020deepgraph} (it differs from the protocol
of~\cite{wang2021neural} as they have different limits for the number of vertices). For
comparison purpose we only use the \textit{test} sets of each category.

\begin{table}[tb]
    \centering
    {\tiny
      \begin{tabular}{l||c|c|c|c|c|c|c|c|c|c}
        Method/Class & Aero&Bike&Bird&Boat&Bottle&Bus&Car&Cat&Chair&Cow \\
        \hline \hline MatchEIG~\citep{maset2017practical} & 9.0 & 15.3 & 15.8 & 16.8 & 19.8 & 23.5 & 13.3 & 13.0 & 11.5 & 9.4 \\
        NGM (unsup)~\citep{wang2021neural} & 30.8  & 42.5 & 44.3 & 33.8 & 39.8 &52.2 & 49.2 & 53.9 & 27.5 & 42.4 \\
        SIGMA~\citep{liu2021stochastic} & 55.1 & 70.6 & 57.8 & 71.3 & 88.0 & 88.6 & 88.2 & 75.5 & 46.8 & 70.9  \\
        DGMC~\citep{fey2020deepgraph} & 50.1 & 65.4 & 55.7 & 65.3 & 80.0 & 83.5 & 78.3 & 69.7 & 34.7 & 60.7  \\
        NGM-v2~\citep{wang2021neural} & 61.8 & 71.2 & 77.6 & 78.8 & 87.3 & 93.6 & 87.7 & 79.8 & 55.4 & 77.8  \\
        \hline Ours (Eig) & 10.7 & 41.6 & 24.6 & 24.3 & 47.6 & 31.0 & 19.0 & 24.1 & 14.4 & 10.4 \\
        \hline \hline
        Method/Class & Table & Dog & Horse&M-Bike&Person&Plant&Sheep&Sofa&Train& TV \\
        \hline \hline MatchEIG~\citep{maset2017practical} & 19.2 & 12.2 & 9.6 & 11.7 & 6.5 & 20.1 & 10.6 & 15.3 & 28.0 & 36.5  \\
        NGM (unsup)~\citep{wang2021neural} & 29.3 & 49.1 & 45.1  & 45.1  & 24.0 & 48.3 & 49.9 & 29.9  & 70.2  & 73.3  \\
        SIGMA~\citep{liu2021stochastic} & 90.4 & 66.5 & 78.0 & 67.5 & 65.0 & 96.7 & 68.5 & 97.9 & 94.3 & 86.1 \\
        DGMC~\citep{fey2020deepgraph} & 70.4& 59.9 & 70.0 & 62.2 & 56.1 & 80.2 & 70.3 & 88.8 & 81.1 & 84.3 \\
        NGM-v2~\citep{wang2021neural} & 89.5& 78.8 & 80.1 & 79.2 & 62.6 & 97.7 & 77.7 & 75.7 & 96.7 & 93.2 \\
        \hline Ours (Eig)  & 15.4& 15.1 & 15.4 & 20.3 & 8.0 & 60.2 & 10.6 & 17.2 & 48.3 & 62.3 \\ 
      \end{tabular}}
    \caption{F1-score of the different approaches on the PascalVOC data-set.}
    \label{tab:pascalvoc}
\end{table}

For the hyper-parameters estimations we use the same protocol as for Willow (Section~\ref{sec:willow})
but using the \textit{horse} class. The optimal parameter were $60$ for vertices and $0.01$ for edges. We
report the results in Table~\ref{tab:pascalvoc}. Except MatchEIG where we use our own implementation,
the results of the other methods are taken from the articles. Notice that we also report the
results from NGM-v2 using an unsupervised setting with the same VGG16 data of the vertices.
While supervised methods  clearly perform better on this set, we are competitive with the two unsupervised 
methods. For computational reasons we did not apply IRGCL or GPow for the projection step. Interestingly while our method 
 generally outperform MatchEIG, there are some classes where it is the reverse (e.g. \textit{tables}).
This could be explained by the way the edges are built (Delaunay tessellation). For some classes we are even better than the
unsupervised version of NGM-v2 (\textit{bottle} and \textit{plant}).

These results are in line with our experiment on the robustness in
Section~\ref{sec:robustness}. Since PascalVOC graphs are affected by noise and vertex
suppression, our method performs badly on the most degraded category. Furthermore some
categories are only composed by few graphs, for the \textit{test} set. The cycle
consistency is then insufficient to recover the good matches.

\section{Conclusion and future work}

We propose a generalization of the KerGM approach~\citep{zhang2019kergm} for multi-graph
matching. Our approach allows for more flexibility than previous methods for dealing
attributes on edges. While others methods directly deals with Lawler's QAP, our approach
relies on more efficient matrices and arrays and can be applied to larger and more
complex data-sets. Contrary to other methods~\citep{bernard2019hippi,wang2020graduated}, we
do not optimize in the universe of vertices, but directly manage the projection onto the
permutation sets. This avoids some common limitations induced by the definition of the
universe of vertices. In addition, our approach benefits from recently published methods to
estimate the projection in reasonable computational times.

The kernel framework is very flexible and allows for specific definitions of the affinity
on vertices and edges. For the edges we only use the classical Random Fourier Feature, but
one can use more adapted features~\citep{li2019towards} while keeping a tractable memory
load. As an unsupervised method, the matching relies essentially on the constraints and
the experiments show it is sufficient to deal with mild level of noise and almost
homogeneous, in term of number of vertices, sets of graphs. Managing more complex sets
requires either supervised learning or additional constraints.

Potential improvements include building a stochastic version of the method in order to be
able to handle very large sets of graphs, for example by adapting stochastic DC
methods~\citep{xu2019stochastic}. We also seek to improve the projection step while keeping
an acceptable computational burden.


\acks{This work received support from the French government under the France 2030 investment plan, as part of the Initiative d'Excellence d'Aix-Marseille Université - A*MIDEX.}

\bibliography{multigraph}


\end{document}